\documentclass[11pt,a4paper,twoside]{article}
\usepackage[T1]{fontenc}
\usepackage[margin=2.8cm,headheight=14pt,headsep=18pt]{geometry}
\usepackage{amsmath,amssymb,amsthm,mathtools}
\usepackage{graphicx,booktabs,array,microtype,fancyhdr}
\usepackage[font=small,labelfont=bf,hypcap=false]{caption}
\usepackage{enumitem,xurl}
\usepackage[hidelinks]{hyperref}
\hypersetup{pdftitle={Ascent Conditions for Length-Normalized Single-Rollout Policy Updates},pdfauthor={Levin David Schwab},pdfcreator={},pdfproducer={},pdfsubject={},pdfkeywords={}}
\setlist{nosep,leftmargin=1.5em}
\theoremstyle{plain}
\newtheorem{proposition}{Proposition}[section]

\theoremstyle{definition}
\newtheorem{remark}[proposition]{Remark}
\numberwithin{equation}{section}
\newcommand{\E}{\mathbb E}
\newcommand{\Var}{\operatorname{Var}}
\newcommand{\Cov}{\operatorname{Cov}}
\newcommand{\R}{\mathbb R}
\newcommand{\dd}{\mathrm d}

\begin{document}
\raggedbottom
\begin{center}
{\Large\bfseries Ascent Conditions for Length-Normalized\\[2pt] Single-Rollout Policy Updates\par}
\vspace{12pt}
{\large Levin David Schwab\par}
\vspace{4pt}
9 October 2026
\end{center}
\thispagestyle{plain}
\vspace{3pt}
\begin{abstract}
\noindent Dividing each trajectory's score contribution by its realized length can change the direction of a reward-centered policy update. In a finite trajectory model, we derive an exact ascent threshold for a three-outcome softmax policy and exhibit a binary-reward example in which the expected Euclidean update decreases task success. Natural-gradient ascent holds for fully controllable outcome distributions with deterministic binary labels, but can fail under a one-parameter restriction. A third-order U-statistic estimates the first-order return change without bias from independent current-policy trajectories, one per request. In the example, a same-sample plug-in estimator instead has the opposite sign in expectation for batch sizes two through forty-nine. Exact policy-grid enumeration confirms the local sign reversal. In a branching process, length-normalized learning initially loses return, then recovers and exceeds the unnormalized comparator. The analysis concerns the expected update field at the current policy.
\end{abstract}

\section{Introduction}
A deployed agent acts on a request and may receive only a success indicator or a score. Repeating the request under the same external state can be impossible because a page has changed or a transaction has completed. This motivates learning from single trajectories collected across requests.

FlashREINFORCE centers rewards across a batch of prompts with one rollout each and averages token contributions within each trajectory \cite{hu2026flash}. SAO and BPCO develop critic-based single-rollout training \cite{hou2026sao,qi2026bpco}. These methods avoid the per-prompt comparison group used in group relative policy optimization (GRPO) \cite{shao2024deepseekmath}. Learning from older traces additionally requires attention to policy drift \cite{zhang2026ctpo,li2026trm}.

We study whether length normalization preserves an ascent direction for expected task reward. Dr.~GRPO identifies length bias and removes the sample-specific denominator \cite{liu2025drgrpo}; DAPO changes token-loss aggregation \cite{yu2025dapo}. Baseline sensitivity is also established when the effective baseline depends on the sampled action \cite{tucker2018}. We derive local ascent conditions for a globally centered, length-normalized field and estimate its return direction from independent requests.

The finite model permits exact calculations of trajectory probabilities and derivatives. It yields a descent counterexample, an ascent theorem under the Fisher metric on the full outcome simplex, and a descent example under parameter restriction. The experiment follows training through both the initial loss and the subsequent recovery.

\paragraph{Contribution and relation to prior work.}
The contribution is a local sign analysis of the specific field $F=\E[T^{-1}(R-\E R)S]$, together with a diagnostic for its alignment with the task gradient. Length bias itself is established by Dr.~GRPO and DAPO \cite{liu2025drgrpo,yu2025dapo}. Ding et al.\ study compatibility of gradient correctness and length neutrality across policy classes \cite{ding2026length}; a biased direction need not be a descent direction. Our three-outcome calculation supplies an exact sign threshold, and the geometry examples separate full outcome control from parameter restriction. The score identities, covariance decompositions, natural-simplex flow, and U-statistic construction use standard tools. Relative to the second-order gradient statistic of Zhou et al.\ \cite{zhou2026ustat}, the third-order statistic here estimates a scalar directional derivative and permits an explicit same-sample bias calculation.

\section{Background and model}\label{sec:model}
\subsection{Trajectories and the score identity}
A \emph{policy} assigns probabilities to actions conditional on the request and the interaction history. A \emph{rollout}, or trajectory, is one realization of the resulting interaction. Let $X$ denote a request drawn from a fixed finite distribution $q$, and let $\tau$ denote a trajectory with probability $p_\theta(\tau\mid X)$. Requests of zero $q$-mass are omitted. The parameter $\theta$ belongs to an open subset of $\R^d$. Assume a finite, fixed trajectory support, strictly positive probabilities on that support, and twice continuously differentiable dependence on $\theta$. Environment transition probabilities do not depend directly on $\theta$. Structurally impossible paths are excluded from the support; at a boundary where a supported probability becomes zero, the log-score statements are not asserted without a separate extension.

Write $T=T(X,\tau)\geq1$ for the number of policy steps, including the stopping action when stopping is sampled. Let $R=R(X,\tau)\in[0,1]$ be a terminal reward and $w=1/T$. These are functions of a realized trajectory, without direct parameter dependence. Their expectations can still depend on $\theta$ because the trajectory distribution changes. Binary rewards measure task success; graded rewards measure the chosen task score.

The \emph{score} is the vector
\begin{equation}\label{eq:score}
 S=\nabla_\theta\log p_\theta(\tau\mid X)
   =\sum_{t=1}^{T}\nabla_\theta\log\pi_\theta(A_t\mid H_t,X).
\end{equation}
The environment factors disappear on differentiation. A forced action with probability one contributes zero. Differentiating the sum of trajectory probabilities gives $\E[S\mid X]=0$. Consequently, for any trajectory function $f$ without direct parameter dependence,
\begin{equation}\label{eq:identity}
 \nabla_\theta\E[f]=\E[fS].
\end{equation}
This likelihood-ratio identity underlies REINFORCE \cite{williams1992}, whose use in language-model policy optimization predates the recent single-rollout methods \cite{ahmadian2024}. The finite support assumptions justify differentiation by a finite sum; extending the argument to unbounded trajectories requires additional integrability conditions.

\subsection{Baselines, normalization, and direction}
Define the expected return and two weighted moments by
\begin{equation}\label{eq:moments}
 J=\E[R],\qquad H=\E[w],\qquad K=\E[wR],
 \qquad a=\nabla J,\quad u=\nabla K,\quad v=\nabla H.
\end{equation}
Without length weighting, subtracting any request-dependent scalar baseline $b(X)$ leaves $\E[(R-b(X))S]=a$. A baseline can therefore reduce variance without changing this mean \cite{greensmith2004}. With length weighting, the corresponding conditional mean becomes
\begin{equation}\label{eq:weighted-baseline}
 \E[w(R-b(X))S\mid X]=\nabla K_X-b(X)\nabla H_X.
\end{equation}
Here $K_X$ and $H_X$ are conditional expectations. The baseline is treated as fixed inside the score estimator, as is usual in a policy update. The identity $\E[wS]=\nabla H$ also appears in Zhang's analysis of baseline error under inverse-length weighting \cite[Appendix~I.4]{zhang2026klpo}. Equation~\eqref{eq:weighted-baseline} shows that reward prediction accuracy alone does not guarantee alignment with the unnormalized gradient.

Let $g$ be an update vector and consider $\theta+\eta g$. Its first-order return change is $\eta a^\top g$. Thus $a^\top g>0$ implies improvement for sufficiently small positive $\eta$, whereas $a^\top g<0$ implies deterioration. A zero directional derivative leaves higher-order behavior undecided. For a fixed symmetric positive definite preconditioner $M$, the relevant quantity is $a^\top Mg$; write $\langle h,k\rangle_M=h^\top Mk$ and $\|h\|_M^2=h^\top Mh$. Both the parameterization and the preconditioner matter. An adaptive optimizer with momentum need not follow either of these directions.

\section{The centered population update}\label{sec:population}
Consider $B\geq2$ independent, identically distributed request--trajectory pairs at the current policy, one trajectory per request. Define
\begin{equation}\label{eq:estimator}
 \overline R=B^{-1}\sum_{j=1}^B R_j,\qquad
 \widehat g_B=B^{-1}\sum_{i=1}^B w_i(R_i-\overline R)S_i,
 \qquad F=u-Jv.
\end{equation}
The pairing is independent across requests; it does not require two trajectories conditional on any one request.

\begin{proposition}[Batch centering]\label{prop:batch}
Under the preceding assumptions,
\begin{equation}\label{eq:batch}
 \E[\widehat g_B]=(1-B^{-1})F.
\end{equation}
Replacing $\overline R$ in the $i$th term by the mean of the other $B-1$ rewards removes only the factor $1-B^{-1}$.
\end{proposition}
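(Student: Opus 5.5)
Expand the batch mean inside the estimator and use linearity with independence across the $B$ pairs. Writing $\overline R=B^{-1}\sum_j R_j$, the $i$th summand splits as
\[
 w_i(R_i-\overline R)S_i=(1-B^{-1})\,w_iR_iS_i-B^{-1}\sum_{j\neq i}w_iR_jS_i .
\]
So $\E[\widehat g_B]$ is the average over $i$ of the expectations of these terms. Since the pairs are identically distributed, each $i$ contributes the same value.

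For the diagonal term, the identity \eqref{eq:identity} applies with $f=wR$, which is a trajectory function without direct parameter dependence. Conditioning on $X$ and then averaging over $q$ gives $\E[w_iR_iS_i]=\nabla K=u$. For the off-diagonal terms with $j\neq i$, independence of pair $j$ from pair $i$ factorizes the expectation: $\E[w_iR_jS_i]=\E[R_j]\,\E[w_iS_i]=J\,\nabla H=Jv$. Here \eqref{eq:identity} is applied again, now with $f=w$. There are $B-1$ such terms. Hence each summand has mean $(1-B^{-1})u-B^{-1}(B-1)Jv=(1-B^{-1})(u-Jv)=(1-B^{-1})F$. Averaging over $i$ gives \eqref{eq:batch}.

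For the leave-one-out variant, replace $\overline R$ in the $i$th term by $\overline R_{-i}=(B-1)^{-1}\sum_{j\neq i}R_j$. By the same two computations the summand has mean $u-(B-1)^{-1}(B-1)Jv=F$, so the factor $1-B^{-1}$ disappears. The phrase ``removes only the factor'' should be read this way: the resulting estimator is unbiased for $F$, and its expected direction is unchanged because $1-B^{-1}>0$ for $B\ge2$.

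The only real point to check is that the off-diagonal factorization is legitimate. It needs full independence between request--trajectory pairs, which the setup gives since the pairs are i.i.d. It also needs the score identity to hold unconditionally. That follows from the conditional statement $\nabla_\theta\E[f\mid X]=\E[fS\mid X]$ together with the fact that $q$ does not depend on $\theta$. The finite support assumptions make all interchanges of differentiation and summation trivial.
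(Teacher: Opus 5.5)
Your proposal is correct and follows the same argument as the paper. The self term has mean $u$ by the score identity, and each of the $B-1$ cross terms factorizes by independence to $Jv$, which gives $(1-B^{-1})F$; in the leave-one-out version only the cross terms remain, so the mean is exactly $u-Jv=F$.
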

\begin{proof}
For the self term, $\E[w_iR_iS_i]=u$. Each other reward is independent of $w_iS_i$, so its contribution has expectation $Jv$. Hence $\E[w_i\overline R S_i]=u/B+(B-1)Jv/B$. Subtraction and averaging prove the identity. Excluding the self term leaves $Jv$ directly.
\end{proof}

The identically distributed assumption is substantive. For independent trajectories attached to a fixed list of possibly different requests, let $J_i,u_i,v_i$ be their conditional moments and derivatives. Direct expansion instead gives
\begin{equation}\label{eq:fixed-prompts}
 \E[\widehat g_B]=\frac{B-1}{B^2}\sum_i
       (u_i-b_{-i}v_i),\qquad
 b_{-i}=\frac1{B-1}\sum_{j\ne i}J_j.
\end{equation}
Batches selected by completion time can have additional dependence. Equation~\eqref{eq:batch} should not be transferred to such sampling schemes without checking their law.

\begin{remark}[Expected return after a finite stochastic step]\label{rem:step}
Fix $\theta$, $B$, and $M$, independently of the current batch, and put $D=a^\top MF$, $c_B=1-B^{-1}$, and $V_B=\E\|M\widehat g_B\|^2$. Suppose every segment from $\theta$ to $\theta+\eta M\widehat g_B$ lies in a neighborhood where $\|\nabla^2J\|_{\mathrm{op}}\leq L_J$. Taylor's theorem and Proposition~\ref{prop:batch} give
\begin{equation}\label{eq:finite-step}
 \left|\E[J(\theta+\eta M\widehat g_B)]-J(\theta)-\eta c_BD\right|
 \leq \tfrac12 L_J\eta^2V_B.
\end{equation}
For fixed $\theta$ and $B$, only finitely many sampled updates are possible. A closed ball inside the parameter domain contains all sufficiently short update segments, and continuity bounds the Hessian there. This justifies the expectation and Taylor remainder without an interchange of an uncontrolled limit and expectation. When $D\ne0$, $L_J>0$, and $0<\eta<2c_B|D|/(L_JV_B)$ within this neighborhood, the expected return change has the sign of $D$. The allowed step size is pointwise in $\theta$, not uniform near the boundary or as $D\to0$. This standard smoothness argument does not certify a particular sampled step or an arbitrary learning rate. For $B=1$, batch centering gives the zero update identically.
\end{remark}

\begin{proposition}[Covariance and reward offsets]\label{prop:covariance}
Let $C=\Cov(w,R)=K-JH$. Then
\begin{equation}\label{eq:covariance}
 F=Ha+\nabla C,\qquad
 a^\top MF=H\|a\|_M^2+\langle a,\nabla C\rangle_M.
\end{equation}
If $a\ne0$ and $\|\nabla C\|_M<H\|a\|_M$, the preconditioned field is an ascent direction. Adding a request-dependent reward offset $c(X)$ leaves $a$ unchanged but changes $F$ by
\begin{equation}\label{eq:offset}
 \Delta F=\E_X[(c(X)-\E_X c(X))\nabla H_X].
\end{equation}
A constant offset leaves the empirical estimator itself unchanged.
\end{proposition}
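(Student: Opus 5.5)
The plan is to prove the three claims in turn, each by direct differentiation of the moments in \eqref{eq:moments}. The only tools needed are the score identity \eqref{eq:identity} and the product rule.

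\emph{Covariance form.} Since $C=K-JH$, the product rule gives $\nabla C=u-Hv-Ja$. Then
\[
Ha+\nabla C=u-Jv=F,
\]
which is the first identity in \eqref{eq:covariance}. Pairing this with $a$ in the $M$-inner product gives $a^\top MF=H\|a\|_M^2+\langle a,\nabla C\rangle_M$, the second identity.

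\emph{Ascent criterion.} Cauchy--Schwarz in the $M$-inner product, which is valid because $M$ is symmetric positive definite, gives $\langle a,\nabla C\rangle_M\ge-\|a\|_M\|\nabla C\|_M$. Hence
\[
a^\top MF\ge\|a\|_M\bigl(H\|a\|_M-\|\nabla C\|_M\bigr).
\]
Since $a\ne0$, we have $\|a\|_M>0$, and the bracket is positive by hypothesis. Therefore $a^\top MF>0$, and by the discussion after \eqref{eq:weighted-baseline} the field $MF$ is an ascent direction.

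\emph{Offset.} Replace $R$ by $R+c(X)$.
\begin{itemize}
\item The new $J$ is $J+\E_X c$. Since $q$ and $c$ do not depend on $\theta$, $a$ is unchanged.
\item The new $K$ is $K+\E_X[c(X)H_X]$. Because $X$ is drawn from the fixed finite distribution $q$ and $c(X)$ is a fixed scalar, $u$ changes by $\E_X[c(X)\nabla H_X]$. This also follows from applying \eqref{eq:identity} conditionally on $X$, as in \eqref{eq:weighted-baseline}.
\item The term $Jv$ changes by $(\E_X c)\,v$. Here $v=\nabla H=\E_X[\nabla H_X]$, again because $q$ is fixed.
\end{itemize}
Subtracting gives $\Delta F=\E_X[c(X)\nabla H_X]-(\E_Xc)\,\E_X[\nabla H_X]$, which rearranges to \eqref{eq:offset}.

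\emph{Constant offset in the estimator.} If $c$ is constant, then $R_i-\overline R$ is unchanged in \eqref{eq:estimator}, so $\widehat g_B$ is unchanged identically. This is consistent with \eqref{eq:offset}: for constant $c$ the centered factor $c-\E_Xc$ vanishes, so $\Delta F=0$.

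The only point needing care is that \eqref{eq:offset} holds even though $c(X)$ may vary with $X$. This relies on $q$ having no $\theta$-dependence, so the gradient passes inside $\E_X$ in a finite sum. The rest is routine algebra.
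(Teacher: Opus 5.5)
Your proof is correct and follows the paper's argument: differentiate $C=K-JH$, apply Cauchy--Schwarz in the $M$ inner product, track how $J$, $u$, and $v$ change under the offset, and note that a constant offset cancels in $R_i-\overline R$. Fix one slip: the product rule gives $\nabla C=u-Ha-Jv$, not $u-Hv-Ja$ as you wrote. With your expression, $Ha+\nabla C$ would not reduce to $u-Jv$, although the identity $Ha+\nabla C=F$ that you state next is correct.
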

\begin{proof}
Differentiate $C=K-JH$ to obtain $\nabla C=u-Ha-Jv$ and use Cauchy--Schwarz in the $M$ inner product. For the offset, $J$ increases by the constant $\E_Xc(X)$, $u$ increases by $\E_X[c(X)\nabla H_X]$, and $v$ is unchanged. For a constant $c$, $(R_i+c)-(\overline R+c)=R_i-\overline R$.
\end{proof}

The sufficient condition depends on the derivative of the reward--inverse-length covariance. Reward offsets in \eqref{eq:offset} may leave $[0,1]$; the identity still holds for bounded modified rewards.

\section{Ascent and policy geometry}\label{sec:ascent}
\subsection{Variation within reward classes}
For binary rewards, an ascent condition can be expressed in conditional moments at the current policy, without differentiating a covariance function. Suppose $0<J<1$, and define
\begin{equation}
\begin{aligned}
 \mu_+&=\E[w\mid R=1],&\quad \mu_-&=\E[w\mid R=0],\\
 C_+&=\Cov(w,S\mid R=1),& C_-&=\Cov(w,S\mid R=0).
\end{aligned}
\end{equation}
Here $C_+$ and $C_-$ are vectors. They measure the association between inverse length and the policy score among successful and unsuccessful trajectories separately.

\begin{proposition}[Conditional covariance criterion]\label{prop:binary}
Let $c=(1-J)\mu_++J\mu_->0$. Then
\begin{equation}\label{eq:binary}
 F=ca+J(1-J)(C_+-C_-).
\end{equation}
For $a\ne0$, the sufficient condition
\begin{equation}\label{eq:binary-bound}
 J(1-J)\|C_+-C_-\|_M<c\|a\|_M
\end{equation}
implies $a^\top MF>0$. In particular, weights constant within each reward class give an ascent direction for every fixed positive definite $M$ and every policy parameterization with $a\ne0$.
\end{proposition}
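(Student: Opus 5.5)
The plan is to expand $F=u-Jv=\E[wRS]-J\E[wS]$ by conditioning on the binary reward, and then apply Cauchy--Schwarz in the $M$ inner product, as in Proposition~\ref{prop:covariance}. Write $p=J=\Pr(R=1)$ and $m_\pm=\E[S\mid R=\pm]$, where $R=\pm$ abbreviates $R=1$ and $R=0$. Since $R\in\{0,1\}$, the identity \eqref{eq:identity} gives $a=\E[RS]=p\,m_+$. Because $\E[S]=0$, we also have $p\,m_++(1-p)m_-=0$, hence $m_-=-a/(1-p)$. The factors $1/p$ and $1/(1-p)$ are legitimate because $0<J<1$.

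Next I would decompose each conditional moment $\E[wS\mid R=\pm]=\mu_\pm m_\pm+C_\pm$. This uses only the definition of the conditional covariance vector. Then
\begin{equation*}
u=\E[wRS]=p(\mu_+m_++C_+)=\mu_+a+pC_+,
\end{equation*}
and
\begin{equation*}
v=\E[wS]=p(\mu_+m_++C_+)+(1-p)(\mu_-m_-+C_-)=(\mu_+-\mu_-)a+pC_++(1-p)C_-.
\end{equation*}
Substituting into $F=u-pv$ gives
\begin{equation*}
F=\bigl(\mu_+-p\mu_++p\mu_-\bigr)a+p(1-p)C_+-p(1-p)C_-,
\end{equation*}
which is exactly \eqref{eq:binary} with $c=(1-J)\mu_++J\mu_-$. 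The constant $c$ is positive because $w>0$ forces $\mu_\pm>0$.

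For the ascent claim, take the $M$ inner product with $a$:
\begin{equation*}
a^\top MF=c\|a\|_M^2+J(1-J)\langle a,C_+-C_-\rangle_M\geq\|a\|_M\bigl(c\|a\|_M-J(1-J)\|C_+-C_-\|_M\bigr).
\end{equation*}
The inequality is Cauchy--Schwarz in the $M$ inner product. When $a\neq0$ and \eqref{eq:binary-bound} holds, the right-hand side is strictly positive. For the final statement, suppose $w$ is almost surely constant on each reward class. Then $C_\pm=0$, so $F=ca$ and $a^\top MF=c\|a\|_M^2>0$ for every positive definite $M$. This holds for every parameterization, since the derivation used only the score identity.

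The main obstacle is bookkeeping rather than ideas: one must keep the conditional decomposition correct and use $\E[S]=0$ to eliminate $m_-$. A degenerate class has already been ruled out by $0<J<1$, so the conditional quantities are well defined.
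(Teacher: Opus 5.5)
Your proposal is correct and takes the paper's route: the conditional score means $a/J$ and $-a/(1-J)$ from the score identity, the decomposition $\E[wS\mid R]=\E[w\mid R]\E[S\mid R]+\Cov(w,S\mid R)$, and Cauchy--Schwarz in the $M$ inner product. The only difference is cosmetic: you expand $u$ and $v$ separately before forming $u-Jv$, whereas the paper splits $\E[w(R-J)S]$ by reward class directly.
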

\begin{proof}
The score identity implies $\E[S\mid R=1]=a/J$ and $\E[S\mid R=0]=-a/(1-J)$. Split $\E[w(R-J)S]$ by the two reward classes and use $\E[wS\mid R]=\E[w\mid R]\E[S\mid R]+\Cov(w,S\mid R)$. This gives \eqref{eq:binary}; taking the $M$ inner product with $a$ and applying Cauchy--Schwarz proves \eqref{eq:binary-bound}. Constancy within each class makes both conditional covariances zero.
\end{proof}

Different average lengths for success and failure therefore do not by themselves produce the sign reversal. A harmful component must involve variation within a reward class and its association with the score. The following example introduces exactly that variation by giving two successful outcomes different lengths.

For a specified $M$, the exact necessary and sufficient condition is
\begin{equation}\label{eq:binary-exact}
 J(1-J)\langle a,C_+-C_-\rangle_M>-c\|a\|_M^2.
\end{equation}
The norm condition in \eqref{eq:binary-bound} is sufficient, not necessary. If $a=0$, every first-order return derivative is zero, even when $F\ne0$.

\subsection{An exact three-outcome threshold}
For one request, let the policy choose among three terminal outcomes with probabilities $p=(x,y,z)$, where $x+y+z=1$ and all components are positive. Use ordinary softmax logits $\theta\in\R^3$, so $p_i=e^{\theta_i}/\sum_j e^{\theta_j}$ and the score on outcome $i$ is $e_i-p$. The three rewards are $(0,1,1)$ and the weights are $c(1,\kappa,1)$, with $c>0$ and $\kappa\geq1$. To represent literal inverse step counts, $1/c$ and $1/(c\kappa)$ must be positive integers. The algebra below also holds for arbitrary positive weights. The first outcome fails, while the second is a shorter successful outcome when $\kappa>1$. The target return is $J=1-x$.

\begin{proposition}[Euclidean sign criterion]\label{prop:threshold}
Define
\begin{equation}
 A=(1-x)^2+y^2+z^2,\qquad
 Q=y+x(1-x)-y^2-z^2.
\end{equation}
Then the directional derivative of return along $F$ is
\begin{equation}\label{eq:threshold}
 a^\top F=c x^2\{A+(\kappa-1)yQ\}.
\end{equation}
It is strictly positive when $Q\geq0$. When $Q<0$, it is negative exactly when
\begin{equation}\label{eq:kcrit}
 \kappa>1+\frac{A}{-yQ}.
\end{equation}
Equality gives zero first-order change, not a guarantee of unchanged return after a finite step.
\end{proposition}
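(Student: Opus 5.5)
The plan is to compute $a$ and $F$ in closed form for the softmax model and take their inner product directly. The key organizing idea is to split $F$ into its $\kappa=1$ part, which turns out to be a multiple of $a$, plus a single correction proportional to $\kappa-1$. No earlier proposition is strictly needed beyond the definitions in \eqref{eq:moments} and \eqref{eq:estimator}. The case $\kappa=1$ is consistent with Proposition~\ref{prop:binary}, since the weights are then constant within each reward class.

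\emph{Step 1: compute $a$.} Since $J=1-x$ and $\nabla_\theta x=x(e_1-p)$ for softmax logits, we get $a=-x(e_1-p)=x(p-e_1)$.

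\emph{Step 2: compute $F$.} By the score identity \eqref{eq:identity}, $F=u-Jv=\E[w(R-J)S]$. With one request this equals
\[
\sum_i p_iw_i(R_i-J)(e_i-p).
\]
The centered rewards are $-(1-x)$ on outcome $1$ and $x$ on outcomes $2$ and $3$. Writing $\kappa=1+(\kappa-1)$ gives
\[
F=c\bigl[-x(1-x)(e_1-p)+xy(e_2-p)+xz(e_3-p)\bigr]+c(\kappa-1)xy(e_2-p).
\]
Next use $\sum_ip_i(e_i-p)=0$, i.e.\ $y(e_2-p)+z(e_3-p)=-x(e_1-p)$. The bracket then collapses to $-x(e_1-p)=a$. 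Hence
\[
F=c\,a+c(\kappa-1)xy(e_2-p).
\]

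\emph{Step 3: take the inner product with $a$.} First, $\|a\|^2=x^2\|e_1-p\|^2=x^2A$. Second,
\[
a^\top(e_2-p)=x(p-e_1)^\top(e_2-p)=x\bigl(y-\|p\|^2+x\bigr)=x\bigl(y+x-x^2-y^2-z^2\bigr)=xQ.
\]
Combining the two gives \eqref{eq:threshold}.

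\emph{Step 4: sign analysis.} $A>0$ because $p\neq e_1$ whenever all components are positive, and $c,x,y>0$. If $Q\ge0$, then $(\kappa-1)yQ\ge0$, so the derivative is strictly positive. If $Q<0$, then $A+(\kappa-1)yQ<0$ is equivalent to $\kappa-1>A/(-yQ)$, which is \eqref{eq:kcrit}. At equality the first-order change vanishes, and by the discussion following \eqref{eq:moments} nothing is asserted about higher-order behavior.

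The only real obstacle is Step 2: one must recognize the collapse of the $\kappa=1$ part to $c\,a$ through the simplex identity. Otherwise the algebra is a routine, if somewhat messy, expansion.
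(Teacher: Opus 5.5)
Your proof is correct and matches the paper's argument: you compute $a=x(p-e_1)$, split the weights into a constant part plus a $(\kappa-1)e_2$ correction to get $F=ca+c(\kappa-1)xy(e_2-p)$, and then use $\|a\|^2=x^2A$ and $a^\top(e_2-p)=xQ$. The one difference is presentational: you check that the constant-weight part equals $ca$ through the simplex identity $\sum_i p_i(e_i-p)=0$, which is the relation $\E[S]=0$ the paper uses implicitly when it writes $\E[c(R-J)S]=ca$.
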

\begin{proof}
The return gradient is $a=x(-(1-x),y,z)^\top$. Splitting the inverse-length vector into $c(1,1,1)$ and $c(\kappa-1)e_2$ gives
\begin{equation}
 F=ca+c(\kappa-1)xy(e_2-p).
\end{equation}
Now $\|a\|^2=x^2 A$ and $a^\top(e_2-p)=xQ$. The stated identity and sign conditions follow; $A>0$ in the interior.
\end{proof}

Take $p=(1/10,1/10,4/5)$ and lengths $(100,1,100)$. Then $c=1/100$, $\kappa=100$, $A=73/50$, $Q=-23/50$, and the critical ratio is $753/23\approx32.739$. In exact arithmetic,
\begin{equation}\label{eq:example}
 a=\begin{pmatrix}-9/100\\1/100\\2/25\end{pmatrix},\quad
 F=\begin{pmatrix}-189/100000\\901/100000\\-89/12500\end{pmatrix},\quad
 a^\top F=-\frac{1547}{5000000}.
\end{equation}
The short successful outcome gains probability partly at the expense of the other successful outcome; the failure probability increases overall. This occurs at the exact baseline $J=0.9$. By Proposition~\ref{prop:batch}, the expected update remains a descent direction for every finite $B\geq2$.

\subsection{A positive result on the full simplex}
The Fisher metric measures infinitesimal changes in a probability distribution. On the interior of a categorical simplex its squared tangent norm is $\sum_i(\dot p_i)^2/p_i$. For independently controllable request-conditional simplexes, use the metric $\sum_x q_x\sum_i(\dot p_{xi})^2/p_{xi}$, with $q_x>0$. A natural gradient is the gradient with respect to this metric, rather than the Euclidean metric of a chosen logit vector \cite{kakade2001}. The resulting simplex flow is the classical replicator equation in Fisher--Shahshahani geometry \cite{harper2009}; the following binary sign calculation is a specialization. This tangent-space formulation avoids inverting the singular Fisher matrix of redundant softmax logits.

\begin{proposition}[Deterministic binary outcomes]\label{prop:natural}
Suppose each request has a freely controllable categorical distribution over a fixed finite set of outcomes. Their rewards $r_i\in\{0,1\}$ and weights $w_i>0$ are fixed. For any shared scalar baseline $b\in[0,1]$, the natural direction associated with $\E[w(R-b)S]$ has nonnegative first-order return change. More generally, for arbitrary fixed reward labels and the request-specific baseline $b(X)=J_X$, the change is $\E[w(R-J_X)^2]$. For one request this is $\E[w(R-J)^2]$.
\end{proposition}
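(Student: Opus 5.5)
The plan is to work directly on the product of request-conditional simplexes, as in the preceding discussion, rather than in logit coordinates. Because each request's distribution $p_{x\cdot}$ is freely controllable, a tangent vector is a family $\dot p=(\dot p_{xi})$ with $\sum_i\dot p_{xi}=0$ for every $x$.

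\textbf{Step 1: the field as a linear functional.} By the score identity \eqref{eq:identity}, the Euclidean field $\E[fS]$ with $f=w(R-b)$ has a concrete meaning: its pairing with a parameter velocity equals the first-order change of $\E[f]$ with $f$ held fixed. On the simplex this is the linear functional
\begin{equation*}
 \ell(\dot p)=\sum_x q_x\sum_i \varphi_{xi}\,\dot p_{xi},\qquad \varphi_{xi}=w_{xi}(r_{xi}-b).
\end{equation*}
The natural direction is then the Riesz representer of $\ell$ with respect to $\langle \dot p,\dot p'\rangle=\sum_x q_x\sum_i \dot p_{xi}\dot p'_{xi}/p_{xi}$ on the tangent space.

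\textbf{Step 2: the natural direction.} I will verify that the representer is the replicator field
\begin{equation*}
 \dot p_{xi}=p_{xi}(\varphi_{xi}-\bar\varphi_x),\qquad \bar\varphi_x=\sum_i p_{xi}\varphi_{xi}.
\end{equation*}
This requires two checks. First, it is tangent, since $\sum_i\dot p_{xi}=0$. Second, for any tangent $\dot p'$,
\begin{equation*}
 \langle \dot p,\dot p'\rangle=\sum_x q_x\sum_i(\varphi_{xi}-\bar\varphi_x)\dot p'_{xi}=\ell(\dot p'),
\end{equation*}
because the constant $\bar\varphi_x$ integrates to zero against the tangent vector $\dot p'$. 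Uniqueness follows from positive definiteness of the metric on the interior.

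\textbf{Step 3: the return change.} Along this direction the first-order return change is
\begin{equation*}
 \sum_x q_x\sum_i r_{xi}\dot p_{xi}=\sum_x q_x\sum_i p_{xi}(r_{xi}-J_x)(\varphi_{xi}-\bar\varphi_x)=\E_X\bigl[\Cov_X(R,\,w(R-b))\bigr].
\end{equation*}
In the middle expression, the $-J_x$ term may be inserted freely because $\sum_i p_{xi}(\varphi_{xi}-\bar\varphi_x)=0$. The same identity lets me drop the $\bar\varphi_x$ term, leaving $\E_X\E_X[(R-J_X)\,w(R-b)]$.

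Two cases then finish the argument.
\begin{itemize}
\item \emph{Request-specific baseline.} Taking $b=b(X)=J_X$ inside $\varphi$ (the Step~2 computation is unchanged, since $b$ only enters $\varphi$ per request) gives $\E[w(R-J_X)^2]$. For a single request this reduces to $\E[w(R-J)^2]$.
\item \emph{Shared scalar baseline with binary labels.} I will check the integrand termwise. For $r=1$ it is $w(1-J_x)(1-b)\ge0$. For $r=0$ it is $w(-J_x)(-b)=wJ_xb\ge0$. Both are nonnegative because $b,J_x\in[0,1]$ and $w>0$, so the expectation is nonnegative.
\end{itemize}

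\textbf{Main obstacle.} The delicate step is Step~2. One must make sure the ``natural direction associated with $\E[w(R-b)S]$'' is interpreted as the metric dual of the cotangent functional $\ell$ on the tangent space, which avoids the singular Fisher matrix of redundant logits. One must also make sure the per-request centering by $\bar\varphi_x$ is what absorbs the shared baseline's failure to equal $J_x$. If the paper intends the definition via a pseudo-inverse of the logit Fisher matrix instead, I would show that this gives the same simplex velocity, because the push-forward of any logit velocity is $p(\dot\theta-\bar{\dot\theta})$.
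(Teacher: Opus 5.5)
Your proposal is correct and follows the paper's proof essentially step for step. You take the replicator tangent $\dot p_{xi}=p_{xi}(\varphi_{xi}-\bar\varphi_x)$ as the Fisher representer, reduce the return change to $\E[w(R-b)(R-J_X)\mid X]$, split it into the nonnegative terms $(1-b)(1-J_X)\E[wR\mid X]$ and $bJ_X\E[w(1-R)\mid X]$, and substitute $b(X)=J_X$ to get the square. One wording slip: dropping $\bar\varphi_x$ in Step~3 uses $\sum_i p_{xi}(r_{xi}-J_x)=0$, not the same identity you used to insert $-J_x$.
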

\begin{proof}
Write $f_i=w_i(r_i-b)$. The natural tangent on each conditional simplex is
\begin{equation}\label{eq:natural-flow}
 \dot p_i=p_i(f_i-\E[f\mid X]).
\end{equation}
Indeed, it sums to zero, and its Fisher inner product with any tangent $h$ equals $\sum_i f_i h_i$. With $J_X=\E[R\mid X]$, the conditional return derivative is
\begin{align}\label{eq:natural-ascent}
 \dot J_X
 &=\E[w(R-b)(R-J_X)\mid X]\\
 &=(1-b)(1-J_X)\E[wR\mid X]
       +bJ_X\E[w(1-R)\mid X]\\
 &\geq0.\nonumber
\end{align}
Averaging over the fixed request distribution proves the binary result. For arbitrary reward labels, substituting $b(X)=J_X$ directly into the first line of \eqref{eq:natural-ascent} gives the square and proves the remaining claims. Each baseline is frozen when forming the update.
\end{proof}

If at least one positive-mass request has both reward labels, strict positivity of its outcome probabilities and weights makes the total derivative strictly positive, including at $b=0$ and $b=1$. If every request has only one label, the derivative is zero. In the binary model, $J=0$ or $J=1$ makes $a=F=0$; the conditional moments in Proposition~\ref{prop:binary} are then unnecessary.

This theorem needs the entire outcome distribution to be controllable. A stochastic environment can produce binary observed rewards even though only graded action success probabilities are controllable. Such a process does not satisfy the theorem merely because its reward is binary. Shared parameters across requests can also violate its premise. For example, let a request of mass $1/4$ have two equally likely actions, success probabilities $(1/4,1/2)$, and lengths $(10,1)$. The remaining request mass succeeds surely. At baseline $b=J=27/32$, the effective action values are $(-19/320,-11/32)$, so even the full action-simplex natural direction has total return derivative
\begin{equation}\label{eq:stochastic-environment}
 \tfrac14\cdot\tfrac14\cdot(\tfrac12-\tfrac14)
          (-\tfrac{11}{32}+\tfrac{19}{320})=-\tfrac{91}{20480}<0.
\end{equation}
The environment's success probabilities are fixed, not controllable outcome probabilities.

There is a counterexample even without an environment. Restrict the three-outcome model above to
\begin{equation}\label{eq:restricted}
 p_\theta(i)=\frac{p_i e^{\theta h_i}}{\sum_jp_j e^{\theta h_j}},
 \qquad h=(-9,1,8),\qquad \theta\in\R.
\end{equation}
At $\theta=0$, direct evaluation gives
\begin{equation}\label{eq:scalar-counter}
 J'=\frac{73}{50},\qquad F_\theta=-\frac{1547}{50000},\qquad
 I_\theta=\Var(h)=\frac{701}{25}.
\end{equation}
Thus $J'F_\theta/I_\theta<0$: even the exact natural update decreases return in this restricted family. There is no conflict with Proposition~\ref{prop:natural}, since its natural tangent need not lie on the one-dimensional submanifold. The positive result concerns the full simplex, not arbitrary neural or shared-feature parameterizations.

\section{Scalar objectives and baselines}\label{sec:potential}
Existence of a scalar potential is a separate property from ascent of $J$.

\begin{proposition}[Potential criterion]\label{prop:potential}
On a simply connected open parameter domain, $F$ is the Euclidean gradient of a scalar function if and only if
\begin{equation}\label{eq:closed}
 \dd J\wedge\dd H=0
\end{equation}
everywhere. In a full finite-outcome simplex with one request, this holds exactly when $(r_i-r_m)_{i<m}$ and $(w_i-w_m)_{i<m}$ are linearly dependent, including the case that either vector is zero.
\end{proposition}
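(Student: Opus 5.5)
The plan is to apply the Poincaré lemma to the $1$-form $\omega=\sum_k F_k\,\dd\theta_k$. On a simply connected open domain, a $C^1$ form is exact exactly when it is closed, and closedness here is the symmetry of the Jacobian of $F$. The main step is to compute $\dd\omega$ from the representation $F=u-Jv$. Since $u=\nabla K$ and $v=\nabla H$ are gradients, $\dd(\sum_k u_k\dd\theta_k)=\dd\dd K=0$. For the second term,
\begin{equation*}
 \dd\Bigl(J\sum_k v_k\,\dd\theta_k\Bigr)=\dd(J\,\dd H)=\dd J\wedge\dd H .
\end{equation*}
Hence $\dd\omega=-\dd J\wedge\dd H$, and closedness is equivalent to \eqref{eq:closed}. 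This uses that $J$, $H$, $K$ are $C^2$, which follows from the standing twice-differentiability assumption via the finite-sum identity \eqref{eq:identity}. The $C^1$ regularity of $F$ is all that Poincaré needs. In coordinates, $\dd J\wedge\dd H=0$ means $a_iv_j-a_jv_i=0$ for all $i,j$, that is, $a$ and $v$ are linearly dependent at every point.

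For the simplex statement, I will work in coordinates $p_1,\dots,p_{m-1}$, with $p_m=1-\sum_{i<m}p_i$ on the open interior. The interior is convex, hence simply connected. This chart is the natural reading of ``full simplex''; a redundant softmax parameterization would reduce to it by the chain rule. In this chart $J=r_m+\sum_{i<m}(r_i-r_m)p_i$ and $H=w_m+\sum_{i<m}(w_i-w_m)p_i$ are affine. Their differentials are therefore the constant covectors $\alpha=(r_i-r_m)_{i<m}$ and $\beta=(w_i-w_m)_{i<m}$. Consequently $\dd J\wedge\dd H=\alpha\wedge\beta$ is constant. It vanishes exactly when $\alpha$ and $\beta$ are linearly dependent, which covers the case where either vector is zero, since the wedge of two covectors is zero iff they are dependent.

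The one delicate point is the parameterization. If $\theta$ is a softmax logit vector, then $\dd J=\sum_i r_i\,\dd p_i$ pulls back through the submersion $\theta\mapsto p$ onto the open simplex. A wedge of pulled-back forms vanishes iff the original wedge vanishes, because the pullback along a submersion is injective on forms. So the criterion is the same in either chart, and I would note this to justify ``full simplex'' independent of coordinates. I expect the main obstacle to be only bookkeeping of this invariance and of the Euclidean-gradient convention. Closedness and exactness are coordinate-free, whereas ``Euclidean gradient'' identifies $F$ with $\omega$ through the standard metric of the chosen $\theta$. The equivalence therefore holds in whichever coordinates $F$ is computed.
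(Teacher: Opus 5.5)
Your proposal is correct and follows the paper's proof essentially step for step. Both arguments write $F$ as the one-form $\dd K-J\,\dd H$, compute its exterior derivative as $-\dd J\wedge\dd H$, use that a closed form on a simply connected domain is exact, and note that in the chart $p_1,\ldots,p_{m-1}$ the differentials of $J$ and $H$ are the constant covectors $(r_i-r_m)_{i<m}$ and $(w_i-w_m)_{i<m}$. Your extra remark is not in the paper's proof: since the softmax map is a submersion, pullback along it is injective on forms. This usefully justifies applying the criterion in the logit parameterization used for the three-outcome example.
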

\begin{proof}
The one-form associated with $F$ is $\alpha=\dd K-J\dd H$. Its exterior derivative is $\dd\alpha=-\dd J\wedge\dd H$. A continuously differentiable one-form on a simply connected open domain is exact if and only if it is closed. In ordinary coordinates the condition says $(\partial_i J)(\partial_j H)=(\partial_j J)(\partial_i H)$ for every pair $i,j$, which is exactly the vanishing of the antisymmetric part of the derivative of $F$. In simplex coordinates $p_1,\ldots,p_{m-1}$, with $p_m=1-\sum_{i<m}p_i$, the differentials of $J$ and $H$ have constant coefficient vectors $(r_i-r_m)_{i<m}$ and $(w_i-w_m)_{i<m}$. Their wedge product is zero precisely when these vectors are linearly dependent, including a zero vector.
\end{proof}

The three-outcome example has independent differentials and a nonconservative field. Non-gradient behavior is known for discount-related updates \cite{nota2020} and group reward transformations \cite{hernandez2026softmax}. Here it arises from trajectory-dependent length weighting with a shared baseline. Ding et al.\ study broader length-weighting tradeoffs under explicit realizability assumptions \cite{ding2026length}.

A different baseline exposes a scalar target. With $b_w=K/H$,
\begin{equation}\label{eq:ratio}
 u-b_wv=H\nabla(K/H).
\end{equation}
Its first-order change of $K/H$ is $H\|\nabla(K/H)\|^2$, positive whenever that gradient is nonzero. This target can differ from $J$. A same-batch ratio estimate of $b_w$ adds finite-sample dependence to this population identity. Inverse-length-weighted centering also arises when profiling the intercept of a sequence-regression objective \cite[Appendix~B.4]{zhang2026klpo}. Baseline choices for variance reduction are studied in \cite{greensmith2004,hao2025opo}; SWE-2 uses a length-weighted reward baseline for that purpose \cite{cognition2026swe2}. Equation~\eqref{eq:ratio} identifies the objective associated with an inverse-length-weighted baseline.

\section{Estimating the return direction from traces}\label{sec:diagnostic}
Independent current-policy trajectories can estimate
\begin{equation}\label{eq:target-d}
 D=a^\top M(u-Jv),
\end{equation}
which is the first-order return change along the preconditioned population field. Here $M$ is a fixed symmetric positive definite matrix, chosen independently of the batch. It may be determined by previous data if the present trajectories are fresh conditional on that history. This quantity concerns an expected field; it is not the realized return change of the batch used for a random update.

\subsection{An unbiased statistic}
For $n\geq3$ independent, identically distributed pairs from $q(X)p_\theta(\tau\mid X)$, with $\theta$ fixed before sampling, define
\begin{equation}\label{eq:ustat}
 \widehat D=\frac{1}{n(n-1)(n-2)}
 \sum_{\substack{i,j,k\ \mathrm{distinct}}}
 R_i(R_j-R_k)w_j S_i^\top M S_j.
\end{equation}
The sum is over ordered triples. Equivalently, this is a third-order U-statistic with its kernel symmetrized over six permutations. U-statistics estimate products of expectations by using distinct observations \cite{hoeffding1948}. Their use in policy optimization is established: Zhou et al.\ express a GRPO-type gradient without reward-standard-deviation or length normalization as a second-order U-statistic \cite{zhou2026ustat}. Equation~\eqref{eq:ustat} instead targets a product involving the return gradient and a length-normalized centered field.

\begin{proposition}[Directional unbiasedness]\label{prop:ustat}
Under the stated independence assumptions, $\E[\widehat D]=D$.
\end{proposition}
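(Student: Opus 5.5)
By linearity of expectation, it suffices to compute the expectation of a single term in \eqref{eq:ustat} for one fixed ordered triple $(i,j,k)$ of distinct indices. The normalization $n(n-1)(n-2)$ counts exactly these triples, so each term having expectation $D$ gives the claim. I will split the term as
\begin{equation*}
 R_iR_jw_jS_i^\top MS_j-R_iR_kw_jS_i^\top MS_j
\end{equation*}
and treat the two pieces separately. Because $M$ is fixed before sampling, it can be pulled outside every expectation.

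For the first piece, the pairs $i$ and $j$ are independent, so the expectation factors:
\begin{equation*}
 \E[R_iS_i]^\top M\,\E[w_jR_jS_j].
\end{equation*}
By the score identity \eqref{eq:identity}, applied over the joint law $q(X)p_\theta(\tau\mid X)$ with the fixed request distribution, we have $\E[R_iS_i]=\nabla J=a$ and $\E[w_jR_jS_j]=\nabla K=u$. The first piece therefore contributes $a^\top Mu$.

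For the second piece, the indices $i,j,k$ are mutually independent, so the expectation factors into three terms:
\begin{equation*}
 \E[R_iS_i]^\top M\,\E[w_jS_j]\,\E[R_k].
\end{equation*}
These evaluate to $a$, $v=\nabla H$, and $J$, respectively, so the second piece contributes $J\,a^\top Mv$. Subtracting the two contributions gives $a^\top M(u-Jv)=D$, as in \eqref{eq:target-d}.

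The only delicate points are justifying the factorization and the finiteness of the expectations. Finiteness is immediate: the support is finite and all probabilities are positive, so every quantity is bounded. Factorization needs full mutual independence of the three distinct pairs together with $\theta$ and $M$ being fixed before sampling. This also explains why the estimator requires three distinct indices. A repeated index would instead produce the same-sample moment $\E[R_i^2S_i^\top M S_i w_i]$, or a correlated $R$ paired with $wS$, which would introduce exactly the plug-in bias that the statistic is designed to avoid. No symmetrization argument is needed, since every ordered triple already has the correct expectation.
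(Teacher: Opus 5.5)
Your proof is correct and matches the paper's argument: each ordered summand with distinct $i,j,k$ factors by independence into $a^\top Mu - J\,a^\top Mv = D$ via the score identity, and there are exactly $n(n-1)(n-2)$ such summands. Your remarks on finiteness and on why a repeated index would produce the plug-in bias are accurate additions but are not needed for the statement.
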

\begin{proof}
For distinct $i,j,k$, independence gives
\begin{align}
 \E[R_iR_jw_jS_i^\top MS_j]&=a^\top Mu,\\
 \E[R_iR_kw_jS_i^\top MS_j]&=Ja^\top Mv.
\end{align}
Every ordered summand has their difference as its expectation. There are exactly $n(n-1)(n-2)$ summands.
\end{proof}

No cubic computation is required. Put $a_i=R_iS_i$, $u_i=R_iw_iS_i$, and $v_i=w_iS_i$, and denote their sums by $A,U,V$. Also put $r_\Sigma=\sum_iR_i$, $A_R=\sum_iR_i a_i$, and $V_R=\sum_iR_i v_i$. Inclusion--exclusion yields
\begin{align}\label{eq:fast}
 \widehat D&=\frac{A^\top MU-\sum_i a_i^\top Mu_i}{n(n-1)}
                   -\frac{Z}{n(n-1)(n-2)},\\
 Z&=r_\Sigma\left(A^\top MV-\sum_i a_i^\top Mv_i\right)
       -A_R^\top MV-A^\top MV_R+2\sum_iR_i a_i^\top Mv_i.\nonumber
\end{align}
For diagonal $M$, these sums require $O(nd)$ arithmetic and $O(d)$ accumulated-vector storage. For a dense $M$, its matrix--vector application cost must also be counted. Computing and retaining policy scores can remain expensive for large models.

\subsection{Same-sample bias and finite-sample precision}
A natural plug-in estimate uses the same empirical moments in every factor:
\begin{equation}\label{eq:plugin}
 \widetilde D=\overline{RS}^{\,\top}M
            (\overline{RwS}-\overline R\,\overline{wS}).
\end{equation}
Its diagonal products need not be negligible in a small batch.

\begin{proposition}[Binary-reward plug-in mean]\label{prop:plugin}
For $n\geq2$, $R\in\{0,1\}$, and $t=\E[RwS^\top MS]$,
\begin{equation}\label{eq:plugin-mean}
 \E[\widetilde D]=\frac{n-1}{n^2}
       \bigl\{(n-1)D+(1-J)(t-a^\top Mv)\bigr\}.
\end{equation}
\end{proposition}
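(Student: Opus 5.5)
The plan is to expand $\widetilde D$ into a double sum and a triple sum over sample indices and take expectations term by term. Independence across indices and the score identity \eqref{eq:identity} will evaluate each term. In the notation of \eqref{eq:fast}, write $a_i=R_iS_i$, $u_i=R_iw_iS_i$, $v_i=w_iS_i$. Then
\begin{equation*}
 \widetilde D=\frac1{n^2}\sum_{i,j}a_i^\top Mu_j-\frac1{n^3}\sum_{i,j,k}R_k\,a_i^\top Mv_j .
\end{equation*}
The population facts needed are $\E[a_i]=a$, $\E[u_j]=u$, $\E[v_j]=v$ and $\E[R_k]=J$, which are instances of \eqref{eq:identity} with $f=R$, $wR$, $w$. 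The only other ingredient is the binary identity $R^2=R$. It collapses every diagonal moment to either $a$ or $t=\E[RwS^\top MS]$.

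\emph{Double sum.} The $n(n-1)$ off-diagonal pairs $i\ne j$ each contribute $a^\top Mu$ by independence. The $n$ diagonal terms each contribute $\E[R^2wS^\top MS]=t$. Hence the first piece has mean $\frac{n-1}{n}a^\top Mu+\frac1n t$.

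\emph{Triple sum.} Classify ordered triples $(i,j,k)$ by their coincidence pattern.
\begin{itemize}
 \item All distinct: $n(n-1)(n-2)$ triples, each contributing $J\,a^\top Mv$.
 \item $i=k\ne j$: $n(n-1)$ triples. Here $R_i^2S_i=R_iS_i$, so each contributes $a^\top Mv$.
 \item $i=j\ne k$: $n(n-1)$ triples, each contributing $J\,t$.
 \item $j=k\ne i$: $n(n-1)$ triples. Here $R_jw_jS_j=u_j$, so each contributes $a^\top Mu$.
 \item $i=j=k$: $n$ triples, each contributing $t$.
\end{itemize}
Dividing by $n^3$ and subtracting from the double-sum mean, I will collect coefficients:
\begin{align*}
 a^\top Mu&:\ \tfrac{n-1}{n}-\tfrac{n-1}{n^2}=\tfrac{(n-1)^2}{n^2},\\
 J\,a^\top Mv&:\ -\tfrac{(n-1)(n-2)}{n^2},\\
 a^\top Mv&:\ -\tfrac{n-1}{n^2},\\
 t&:\ \tfrac1n-\tfrac{(n-1)J}{n^2}-\tfrac1{n^2}=\tfrac{(n-1)(1-J)}{n^2}.
\end{align*}

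The last step is matching these coefficients against the expansion of the right side of \eqref{eq:plugin-mean}, using $D=a^\top Mu-J\,a^\top Mv$. Expanding $\frac{n-1}{n^2}\{(n-1)D+(1-J)(t-a^\top Mv)\}$ gives the following.
\begin{itemize}
 \item The $a^\top Mu$ coefficient is $\frac{(n-1)^2}{n^2}$.
 \item The $J\,a^\top Mv$ coefficient is $\frac{n-1}{n^2}\bigl(-(n-1)+1\bigr)=-\frac{(n-1)(n-2)}{n^2}$.
 \item The plain $a^\top Mv$ coefficient is $-\frac{n-1}{n^2}$.
 \item The $t$ coefficient is $\frac{(n-1)(1-J)}{n^2}$.
\end{itemize}
These agree with the collected coefficients, which proves \eqref{eq:plugin-mean}.

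The main obstacle is bookkeeping rather than any analytic difficulty. The risks are getting the five coincidence classes and their counts right, and seeing which classes use $R^2=R$. The binary assumption enters exactly in the classes $i=k\ne j$, $i=j$ in the double sum, and $i=j=k$. For graded rewards these would produce new moments such as $\E[R^2S]$ or $\E[R^2wS^\top MS]$ instead of $a$ and $t$. I will check the bookkeeping through the total count $n(n-1)(n-2)+3n(n-1)+n=n^3$. I will also check that for $n=1$ the formula gives zero, consistent with $\widetilde D\equiv0$ when $\overline{RwS}=\overline R\,\overline{wS}$.
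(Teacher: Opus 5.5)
Your proposal is correct and follows the paper's proof essentially step for step. It uses the same expansion into a two-index and a three-index sum, the same coincidence classes with expectations $J\,a^\top Mv$, $a^\top Mv$, $Jt$, $a^\top Mu$ and $t$, and the same use of $R^2=R$ and $D=a^\top Mu-J\,a^\top Mv$; your explicit coefficient matching and the count check $n(n-1)(n-2)+3n(n-1)+n=n^3$ just spell out the final subtraction the paper leaves implicit.
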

\begin{proof}
Write $A_0=a^\top Mu$ and $V_0=a^\top Mv$. The two-index term in \eqref{eq:plugin} has expectation $(n-1)A_0/n+t/n$. In its three-index term, the all-distinct indices contribute $(n-1)(n-2)JV_0/n^2$. Of the three ways exactly two indices coincide, their expectations are respectively $A_0$, $V_0$, and $Jt$, each with coefficient $(n-1)/n^2$. The all-equal case contributes $t/n^2$. Subtracting these terms, using $R^2=R$ and $D=A_0-JV_0$, gives \eqref{eq:plugin-mean}.
\end{proof}

For \eqref{eq:example} with $M=I$, $t=1831/12500$ and $a^\top v=-2277/500000$. Therefore $(1-J)(t-a^\top v)=75517/5000000$. The bracket in \eqref{eq:plugin-mean} changes sign at $n=5928/119\approx49.815$. Although $D<0$, the plug-in mean is positive for every integer $2\leq n\leq49$.

Table~\ref{tab:diagnostic} sums the complete multinomial count law to obtain moments and sign probabilities. At $n=3$, the unbiased estimator has standard deviation over ninety times $|D|$ and is negative with probability only $0.048$. At $n=32$, its sign is correct more often than the plug-in estimator's, but variance remains substantial. These probabilities apply to this example; a decision rule would require task-specific uncertainty control.

\begin{table}[t]
\centering\small
\caption{Complete count-law evaluation for the three-outcome example, with $D=-0.0003094$. Bias is $\E[\widetilde D]-D$. Probabilities and moments are numerical sums over the entire multinomial law, rather than Monte Carlo estimates.}
\label{tab:diagnostic}
\begin{tabular}{rrrrr}
\toprule
$n$ & Plug-in bias / $|D|$ & $\operatorname{sd}(\widehat D)/|D|$ & $\Pr(\widehat D<0)$ & $\Pr(\widetilde D<0)$ \\
\midrule
3 & 11.403 & 90.609 & 0.048 & 0.000 \\
8 & 5.574 & 15.692 & 0.215 & 0.000 \\
32 & 1.539 & 3.344 & 0.747 & 0.614 \\
64 & 0.782 & 1.893 & 0.820 & 0.716 \\
128 & 0.394 & 1.172 & 0.855 & 0.788 \\
512 & 0.099 & 0.522 & 0.953 & 0.938 \\
\bottomrule
\end{tabular}

\end{table}

The independence requirement also rules out using \eqref{eq:ustat} unchanged after fitting $\theta$ or $M$ to the same archive. One rollout per request is compatible with this diagnostic; arbitrary reuse of those rollouts is a different statistical problem.

\subsection{A finite-sample sign certificate}
Unbiasedness alone does not justify a sign decision. A conservative certificate follows from a standard bounded-kernel U-statistic argument \cite{hoeffding1963}. Assume known deterministic bounds $w\leq W$ and $\|S\|_M\leq G$ on the entire current-policy support, and set $C_0=WG^2>0$ and $m=\lfloor n/3\rfloor$.

\begin{proposition}[Bounded-score confidence interval]\label{prop:confidence}
For $n\geq3$, $\varepsilon>0$, and a fixed current-policy sampling law,
\begin{equation}\label{eq:confidence}
 \Pr\{|\widehat D-D|\geq\varepsilon\}
 \leq 2\exp\!\left(-\frac{m\varepsilon^2}{2C_0^2}\right).
\end{equation}
Consequently, $[\widehat D-r_\delta,\widehat D+r_\delta]$ covers $D$ with probability at least $1-\delta$, where
\begin{equation}\label{eq:radius}
 r_\delta=C_0\sqrt{\frac{2\log(2/\delta)}{m}},\qquad 0<\delta<1.
\end{equation}
\end{proposition}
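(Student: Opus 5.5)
The plan is to apply Hoeffding's classical inequality for U-statistics with bounded kernels \cite{hoeffding1963}, after checking that the kernel of \eqref{eq:ustat} is bounded by $C_0$ in absolute value. Write $\widehat D$ as a third-order U-statistic with the symmetrized kernel
\[
 h(z_1,z_2,z_3)=\frac16\sum_{\sigma}R_{\sigma(1)}(R_{\sigma(2)}-R_{\sigma(3)})\,w_{\sigma(2)}\,S_{\sigma(1)}^\top MS_{\sigma(2)},
\]
where $\sigma$ ranges over the permutations of $\{1,2,3\}$. Averaging the ordered-triple sum in \eqref{eq:ustat} is the same as averaging $h$ over unordered triples. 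By Proposition~\ref{prop:ustat}, $\E[h]=D$.

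\textbf{Step 1: kernel bound.} Each unsymmetrized summand satisfies
\[
 |R_i(R_j-R_k)w_jS_i^\top MS_j|\leq C_0 .
\]
Indeed, $R\in[0,1]$ gives $|R_i(R_j-R_k)|\leq1$, and $0<w_j\leq W$. By Cauchy--Schwarz in the $M$ inner product, $|S_i^\top MS_j|\leq\|S_i\|_M\|S_j\|_M\leq G^2$. Hence $h$ takes values in $[-C_0,C_0]$, an interval of length $2C_0$. Since the bounds hold on the entire current-policy support, they hold almost surely.

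\textbf{Step 2: Hoeffding's block representation.} This is the only nonroutine part, although it is standard. For each permutation $\pi$ of $\{1,\dots,n\}$, form
\[
 V_\pi=\frac1m\sum_{\ell=1}^m h\bigl(z_{\pi(3\ell-2)},z_{\pi(3\ell-1)},z_{\pi(3\ell)}\bigr),
\]
an average of $m$ independent bounded variables with mean $D$. Then $\widehat D$ equals the average of $V_\pi$ over all $n!$ permutations, because every unordered triple appears equally often. For $s>0$, convexity of the exponential gives
\[
 \E\, e^{s(\widehat D-D)}\leq \frac{1}{n!}\sum_\pi\E\, e^{s(V_\pi-D)} .
\]
Hoeffding's lemma bounds each term by $\exp\bigl(s^2(2C_0)^2/(8m)\bigr)=\exp\bigl(s^2C_0^2/(2m)\bigr)$. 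Markov's inequality, optimized at $s=m\varepsilon/C_0^2$, yields
\[
 \Pr\{\widehat D-D\geq\varepsilon\}\leq\exp\!\left(-\frac{m\varepsilon^2}{2C_0^2}\right).
\]
The same argument applied to $-h$ gives the lower tail, and a union bound supplies the factor $2$ in \eqref{eq:confidence}. The independence of the $n$ pairs, with $\theta$ and $M$ fixed before sampling, is exactly what makes the blocks independent. The condition $n\geq3$ ensures $m\geq1$.

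\textbf{Step 3: inversion.} Set $2\exp(-m\varepsilon^2/(2C_0^2))=\delta$ and solve, which gives $\varepsilon=r_\delta$ as in \eqref{eq:radius}. The event $|\widehat D-D|<r_\delta$ then has probability at least $1-\delta$, and on it the interval $[\widehat D-r_\delta,\widehat D+r_\delta]$ contains $D$. I expect no real obstacle beyond verifying that the permutation-averaging identity matches the normalization in \eqref{eq:ustat}, that is, that each ordered triple appears the same number of times across the blocks.
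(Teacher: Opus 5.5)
Your proposal is correct and follows the paper's proof essentially step for step. It uses the same $WG^2$ kernel bound via $0\le R\le 1$ and Cauchy--Schwarz in the $M$ inner product, and the same Hoeffding permutation/disjoint-block representation of $\widehat D$. It then applies Jensen and the bounded-variable exponential-moment bound $\exp(s^2C_0^2/(2m))$, followed by exponential Markov at $s=m\varepsilon/C_0^2$ and a union bound over the two tails; your explicit inversion for $r_\delta$ spells out a step the paper leaves implicit.
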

\begin{proof}
Symmetrize the kernel in \eqref{eq:ustat}. Each ordered term has absolute value at most $WG^2$, by $0\leq R\leq1$ and Cauchy--Schwarz, so the symmetric kernel lies in $[-C_0,C_0]$. For each permutation of the observations, average this kernel over its first $m$ disjoint triples, ignoring the remainder. The resulting block average has mean $D$ and consists of $m$ independent terms. Averaging these block averages over all permutations is exactly $\widehat D$. Jensen's inequality and the bounded-variable exponential-moment bound yield
\[
 \E e^{\lambda(\widehat D-D)}\leq
 \exp\!\left(\frac{\lambda^2 C_0^2}{2m}\right).
\]
Exponential Markov inequality, optimized at $\lambda=m\varepsilon/C_0^2$, gives each one-sided tail; summing both tails proves \eqref{eq:confidence}.
\end{proof}

An interval entirely below zero certifies a negative population derivative with false-certificate probability at most $\delta$; an interval crossing zero is inconclusive. This is an unconditional error guarantee, not a posterior probability conditional on reporting a sign. In \eqref{eq:example}, $W=1$ and $G^2=73/50$ are valid, giving $r_{0.05}\approx0.304$ at $n=512$, much larger than $|D|=0.0003094$. The bound is valid but uninformative here; no practical sample efficiency is claimed.

The bound is nonasymptotic and uniform over laws with the same valid $W,G$. Finite support guarantees finite bounds at each fixed interior parameter, not a common bound over all parameters. A maximum estimated from the same sample is not a valid substitute. If $\theta$ and $M$ are chosen from past data, the proof applies conditional on that history only when the new pairs are conditionally iid with valid conditional bounds. It does not provide simultaneous coverage across adaptively selected checkpoints. If $C_0=0$, both the statistic and $D$ vanish identically.

\section{Finite experiments}\label{sec:experiments}
\subsection{Exact signs and a branching process}
The first experiment enumerates all $1,176$ interior policies $p_i=n_i/50$ with positive integer $n_i$ summing to $50$. Rewards are $(0,1,1)$, and lengths are $(L,1,L)$. The nine tested values of $L$ are $1$, $2$, $5$, $10$, $20$, $25$, $50$, $100$, and $250$. Fraction arithmetic independently evaluates the score expectation and the threshold in Proposition~\ref{prop:threshold}. All $10,584$ classifications agree. The respective counts of negative directions are $0$, $0$, $0$, $0$, $3$, $25$, $104$, $156$, and $197$. Thus the sign reversal occupies a nonempty part of this grid, including $156/1176\approx13.27\%$ at $L=100$. A grid proportion is not a frequency estimate for real tasks.

The second experiment uses a finite branching process with three equiprobable request types and a shared nine-parameter policy. At the root, the policy chooses a revision branch, a direct completion, or a tool branch. Direct completion ends after one step. Each other branch has two binary decisions, with the last conditioned on the preceding choice, giving nine terminal paths. Long paths have lengths $50$, $100$, and $150$ for the three request types. Between branching decisions, they contain forced actions with zero score. Thus length counts policy steps in this constructed process; it is not a claim about tool observations or language-model tokens.

The base rewards of the revision and tool paths, indexed by the two binary choices, are respectively
\begin{equation}
 \begin{pmatrix}0&0.02\\0.01&0.03\end{pmatrix},\qquad
 \begin{pmatrix}0.2&0.45\\0.75&1\end{pmatrix};
\end{equation}
direct completion has reward one. The three request types transform the base reward $r$ to $0.2+0.8r$, $0.1+0.9r$, and $r$. The root probabilities start at $(0.1,0.1,0.8)$, and the probability of choice one at each binary decision starts at $0.9$. There are three root logits and six scalar binary logits. Every expectation can be evaluated by enumerating the $27$ request--path pairs. The initial return is $J_0=0.85483$. This system deliberately instantiates the mechanism in Section~\ref{sec:ascent}; it is not a representative task benchmark.

Four updates are compared. U uses $B^{-1}\sum_i(R_i-\overline R)S_i$. L uses \eqref{eq:estimator}. P uses $B^{-1}\sum_i w_i(R_i-J_{X_i})S_i$, with the exact request-conditional return as an oracle baseline. M divides U by the current exact population mean length $\E[T]$. M is an oracle scalar control; replacing its denominator with a random batch mean would define a different estimator. P tests whether a conditional reward baseline removes the effect. Neither oracle is assumed available in deployment.

\subsection{Sampling and scale calibration}
There are $32$ independent seeds, batch sizes $B=32$ and $64$, and $8,192$ fresh trajectories per run. Each draw is a new request instance; the three context types recur. Unique instance identifiers do not imply semantically distinct tasks. Across methods, the request-type and categorical uniforms are paired within a seed, while sampled paths follow each method's current policy. The estimators never regenerate a comparison trajectory for an earlier instance. A constant-length control sets every path length to $100$.

For each method and setting, a fixed positive scalar matches the norm of its initial expected update to $\|\nabla J(\theta_0)\|$. This calibration is computed once, not readjusted during training. For variable lengths and $B=32$, the scalars for U, L, P, and M are approximately $1.0323$, $6.1701$, $5.9743$, and $93.0065$. Step sizes are $0.15$ and $0.30$ for the respective batch sizes, so the step-size sum per trajectory budget agrees. This controls initial scale, not estimator variance or optimal learning-rate tuning.

The summaries fixed before the original simulations are return change at $512$ trajectories, final return, minimum return including the initial iterate, and the trapezoidal budget-average of return across recorded iterates. This was an internal design record, not an external preregistration. Across-seed uncertainty is reported by Student-$t$ intervals for the mean; these are approximate with $32$ seeds, not distribution-free guarantees. Expected-update iterations use enumerated expectations at the same calibrated step sizes. They are deterministic discrete-time comparisons, not averages of stochastic trajectories or exact solutions of a continuous-time flow. All methods, seeds, batch sizes, and both length settings are retained in the accompanying results.

\subsection{Results}
At initialization with variable lengths, the cosine between the expected update and the true gradient is $1$ for U and M, approximately $-0.146714$ for L, and $-0.146748$ for P. The conditional oracle baseline therefore does not correct the initial direction in this example. For $B=32$, the unscaled directional derivatives for L and P are approximately $-2.48705\times10^{-4}$ and $-2.56917\times10^{-4}$. These signs come from exact path enumeration before sampling.

\begin{table}[t]
\centering
\small
\caption{Finite-sample learning in the enumerated branching process. Entries are means and approximate 95\% Student-$t$ interval half-widths across 32 seeds; $J_0=0.85483$. Every run uses 8,192 fresh trajectories. The minimum includes the initial iterate. Length normalization causes an early decrease here, but L and P recover and attain higher final return than U; the experiment therefore does not show general training inferiority.}
\label{tab:finite-sample}
\begin{tabular}{ccccc}
\toprule
$B$ & Update & $J_{512}-J_0$ & $J_{8192}$ & $\min_k J_k$ \\
\midrule
32 & U & $0.0213 \pm 0.0005$ & $0.9546 \pm 0.0004$ & $0.8548 \pm 0.0000$ \\
32 & L & $-0.0036 \pm 0.0002$ & $0.9714 \pm 0.0003$ & $0.8498 \pm 0.0001$ \\
32 & P & $-0.0035 \pm 0.0002$ & $0.9714 \pm 0.0001$ & $0.8498 \pm 0.0002$ \\
32 & M & $0.0212 \pm 0.0005$ & $0.9538 \pm 0.0003$ & $0.8548 \pm 0.0000$ \\
\midrule
64 & U & $0.0213 \pm 0.0005$ & $0.9546 \pm 0.0004$ & $0.8548 \pm 0.0000$ \\
64 & L & $-0.0035 \pm 0.0002$ & $0.9715 \pm 0.0003$ & $0.8498 \pm 0.0001$ \\
64 & P & $-0.0034 \pm 0.0002$ & $0.9714 \pm 0.0001$ & $0.8498 \pm 0.0002$ \\
64 & M & $0.0213 \pm 0.0005$ & $0.9538 \pm 0.0003$ & $0.8548 \pm 0.0000$ \\
\bottomrule
\end{tabular}
\end{table}

Table~\ref{tab:finite-sample} and Figure~\ref{fig:learning} show both the early loss and subsequent recovery. At $512$ trajectories and $B=32$, the mean return changes are $+0.0213$ for U, $-0.0036$ for L, and $-0.0035$ for P. By $8,192$ trajectories, L reaches $0.9714$, exceeding U's $0.9546$. The corresponding $B=64$ results are similar. The late ordering is evidence against describing the example as general training inferiority of length normalization. Initial directional alignment and finite-budget final return are different properties.

\begin{figure}[t]
\centering
\includegraphics[width=\linewidth]{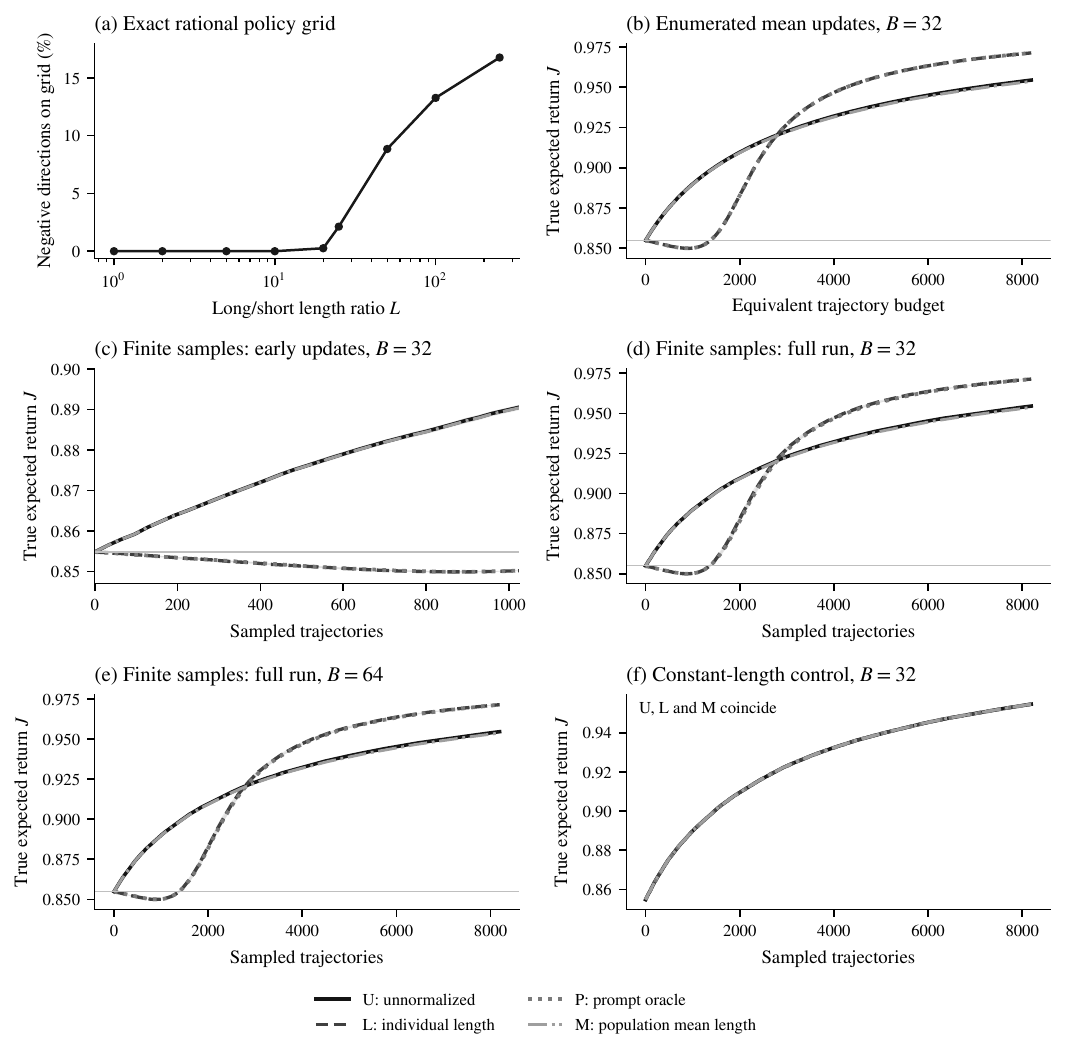}
\caption{The three-outcome policy grid (a) and controlled learning in the branching process (b--f). The figure separates enumerated mean-update iterations from finite-sample runs, displays the early decrease as well as the complete training horizon, and includes the constant-length control. Finite-sample bands are approximate pointwise 95\% intervals for the mean across 32 seeds. All returns are evaluated by path enumeration at the current policy.}
\label{fig:learning}
\end{figure}

Under constant lengths, the calibrated U, L, and M sampled curves coincide to a maximum absolute difference below $6\times10^{-16}$. This checks that the implementation isolates the sample-specific denominator. The main experiment comprises $512$ runs: four methods, two batch sizes, two length settings, and $32$ seeds. It establishes the sign mechanism and its finite-sample expression in constructed systems; it does not compare full implementations of FlashREINFORCE, SAO, or BPCO.

\subsection{Learning-rate sensitivity}
An exploratory check added after the original simulations compares U and L at $B=32$, multiplying the step size by $0.5$, $1$, or $2$, while retaining the initial scale calibration, budget, and $32$ paired seeds. The middle setting reproduces the original curves exactly; the other settings add $128$ runs, for $640$ distinct runs across the main experiment and sensitivity analysis. Table~\ref{tab:sensitivity} shows an early decline for L and a higher final return at all three rates. This limited sweep does not establish optimal tuning or general superiority. The budget-average ordering changes: L minus U is $-0.01372$, $+0.00194$, and $+0.00739$, respectively. Thus even within this example the ranking depends on learning rate and the evaluation summary.

\begin{table}[t]
\centering\small
\caption{Exploratory step-size sensitivity in the variable-length branching process. Entries are means over 32 paired seeds; $\Delta J_{512}=J_{512}-J_0$. Paired differences and approximate intervals are included in the accompanying results.}
\label{tab:sensitivity}
\begin{tabular}{rrrrr}
\toprule
Step multiplier & $\Delta J_{512}^{\rm U}$ & $\Delta J_{512}^{\rm L}$ & $J_{8192}^{\rm U}$ & $J_{8192}^{\rm L}$ \\
\midrule
0.5 & +0.0116 & -0.0019 & 0.9328 & 0.9484 \\
1.0 & +0.0213 & -0.0036 & 0.9546 & 0.9714 \\
2.0 & +0.0362 & -0.0044 & 0.9721 & 0.9824 \\
\bottomrule
\end{tabular}

\end{table}

\section{Discussion}\label{sec:discussion}
The analysis begins at an on-policy point where importance ratios equal one and studies a centered score update. FlashREINFORCE additionally uses token importance ratios, trajectory filtering, and an adaptive optimizer; its Appendix~B explicitly treats inverse-length scaling as a practical approximation to an unnormalized surrogate \cite{hu2026flash}. The counterexample therefore identifies a component-level obstruction to a general ascent guarantee. It does not contradict reported training gains or show that its full implementation follows a negative direction.

Applying the diagnostic to logged traces requires checking the request sampling law, behavior-policy support, and uncertainty. The iid calculation does not cover arbitrary heterogeneous batches or stale-policy archives, and the variance in Table~\ref{tab:diagnostic} can preclude a useful decision.

The model assumes finite support, fixed terminal rewards, and known scores. Every population and finite-sample mean identity is exact under these assumptions. Fraction-based checks are exact arithmetic; path expectations and multinomial moments computed numerically have floating-point error. No asymptotic normality, optimization convergence, or parameter-uniform small-step guarantee is claimed. Changes in preferences, censored feedback, and unknown logging probabilities require additional analysis. Shared language-model parameters also fall outside the full-simplex ascent theorem. Practical score projections, off-policy corrections, and useful variance-adaptive confidence bounds remain untested extensions.

\begin{samepage}
\section{Conclusion}
When expected inverse length varies with the policy, the mean length-normalized update depends on the baseline. The three-outcome threshold identifies when the expected direction decreases binary task success. Full-simplex natural ascent holds for binary labels with a shared baseline in $[0,1]$, and for graded labels with exact request-conditional baselines, but can fail under parameter restriction. The U-statistic estimates the return direction without bias. A bounded-score certificate is available, though its conservative radius is uninformative in the example. In the branching experiment, the initial loss is followed by recovery and a higher final return than the unnormalized update at the tested learning rates.

\end{samepage}

\section*{Code and data availability}
The arXiv ancillary files contain the experiment programs, dependency versions, complete numerical tables, and learning curves. They reproduce the policy-grid enumeration, the diagnostic count-law calculation, the branching experiments, and the learning-rate sensitivity analysis.

\section*{Use of generative AI}
Generative AI tools were used in developing the analysis, implementing the numerical experiments, and drafting and revising the manuscript. The author is responsible for the results and their presentation.

\begingroup
\raggedright
\newcommand{\etalchar}[1]{$^{#1}$}

\endgroup
\end{document}